\documentclass{article}

\PassOptionsToPackage{numbers, compress}{natbib}


\usepackage[final]{neurips_2023}



\usepackage[utf8]{inputenc} 
\usepackage[T1]{fontenc}    
\usepackage{hyperref}       
\usepackage{url}            
\usepackage{booktabs}       
\usepackage{amsfonts}       
\usepackage{nicefrac}       
\usepackage{microtype}      
\usepackage{xcolor}         
\usepackage{graphicx}
\usepackage{algorithm}
\usepackage{algorithmic}
\usepackage{multirow}

\newcommand{\ie}{\emph{i.e.}}
\newcommand{\xhdr}[1]{\vspace{1.7mm}\noindent{{\bf #1.}}}

\usepackage{amsmath}
\usepackage{amssymb}
\usepackage{mathtools}
\usepackage{amsthm}
\theoremstyle{plain}
\newtheorem{theorem}{Theorem}[section]

\theoremstyle{definition}

\theoremstyle{remark}

\usepackage[capitalize,noabbrev]{cleveref}

\newcommand{\oursabbrev}{GST}
\newcommand{\oursone}{GST-One}
\newcommand{\oursall}{GST}  
\newcommand{\oursembed}{GST+E}  
\newcommand{\oursfinetune}{GST+EF}  
\newcommand{\oursfinal}{GST+EFD}

\usepackage{amsmath,amsfonts,bm}









\def\eqref#1{equation~\ref{#1}}









\def\1{\bm{1}}








\def\va{{\bm{a}}}
\def\vb{{\bm{b}}}

\def\vh{{\bm{h}}}



\DeclareMathAlphabet{\mathsfit}{\encodingdefault}{\sfdefault}{m}{sl}
\SetMathAlphabet{\mathsfit}{bold}{\encodingdefault}{\sfdefault}{bx}{n}


\def\gB{{\mathcal{B}}}

\def\gE{{\mathcal{E}}}

\def\gG{{\mathcal{G}}}

\def\gV{{\mathcal{V}}}










\newcommand{\R}{\mathbb{R}}



\DeclareMathOperator*{\argmin}{arg\,min}

\newcommand{\IGNORE}[1]{}

\title{ 
Large Graph Property Prediction via\\ Graph Segment Training}

%

\author{
Kaidi Cao\textsuperscript{\rm 1}\thanks{The work was partially completed during Kaidi Cao's internship at Google.},
Phitchaya Mangpo Phothilimthana\textsuperscript{\rm 2},
Sami Abu-El-Haija\textsuperscript{\rm 2},
Dustin Zelle\textsuperscript{\rm 2},\\
{\bf Yanqi Zhou\textsuperscript{\rm 2},
Charith Mendis\textsuperscript{\rm 3}\thanks{The work was partially completed when Charith Mendis was a visiting researcher at Google.}, 
Jure Leskovec\textsuperscript{\rm 1}, 
Bryan Perozzi\textsuperscript{\rm 2}}\\
\textsuperscript{\rm 1}Stanford University,
\textsuperscript{\rm 2}Google,
\textsuperscript{\rm 3}UIUC}

\begin{document}

\maketitle

\begin{abstract}
Learning to predict properties of a large graph is challenging because each prediction requires the knowledge of an entire graph, while the amount of memory available during training is bounded.
Here we propose Graph Segment Training (\oursabbrev{}),
a general framework that utilizes a divide-and-conquer approach to allow learning large graph property prediction with a constant memory footprint.
\oursabbrev{} first divides a large graph into segments and then backpropagates through only a few segments
sampled per training iteration. 
We refine the \oursabbrev\ paradigm by introducing a historical embedding table to efficiently obtain embeddings for segments not sampled for backpropagation.
To mitigate the staleness of historical embeddings, we design two novel techniques.
First, we finetune the prediction head
to fix the input distribution shift.
Second, we introduce \emph{Stale Embedding Dropout} to drop some stale embeddings during training to reduce bias.
We evaluate our complete method \oursfinal{} (with all the techniques together) on two large graph property prediction benchmarks: MalNet and TpuGraphs.
Our experiments show that \oursfinal{} is both memory-efficient and fast, while offering a slight boost on test accuracy over a typical full graph training regime.
\end{abstract}

\section{Introduction}
\label{sec:introduction}

Graph property prediction is a task of predicting a certain property or a characteristic of an entire graph~\citep{chami2022machine}. Important applications include, predicting properties of molecules~\citep{wu2018moleculenet,hu2020open}, predicting properties of programs/code~\citep{allamanis2019adverse,hu2020open,autotvm,zheng2021tenset} and
predicting properties of organisms based on their protein-protein interaction networks~\citep{szklarczyk2019string,zitnik2019evolution}.

\begin{figure*}[t]
    \centering
    \label{fig:teaser}
    \includegraphics[width=0.85\textwidth]{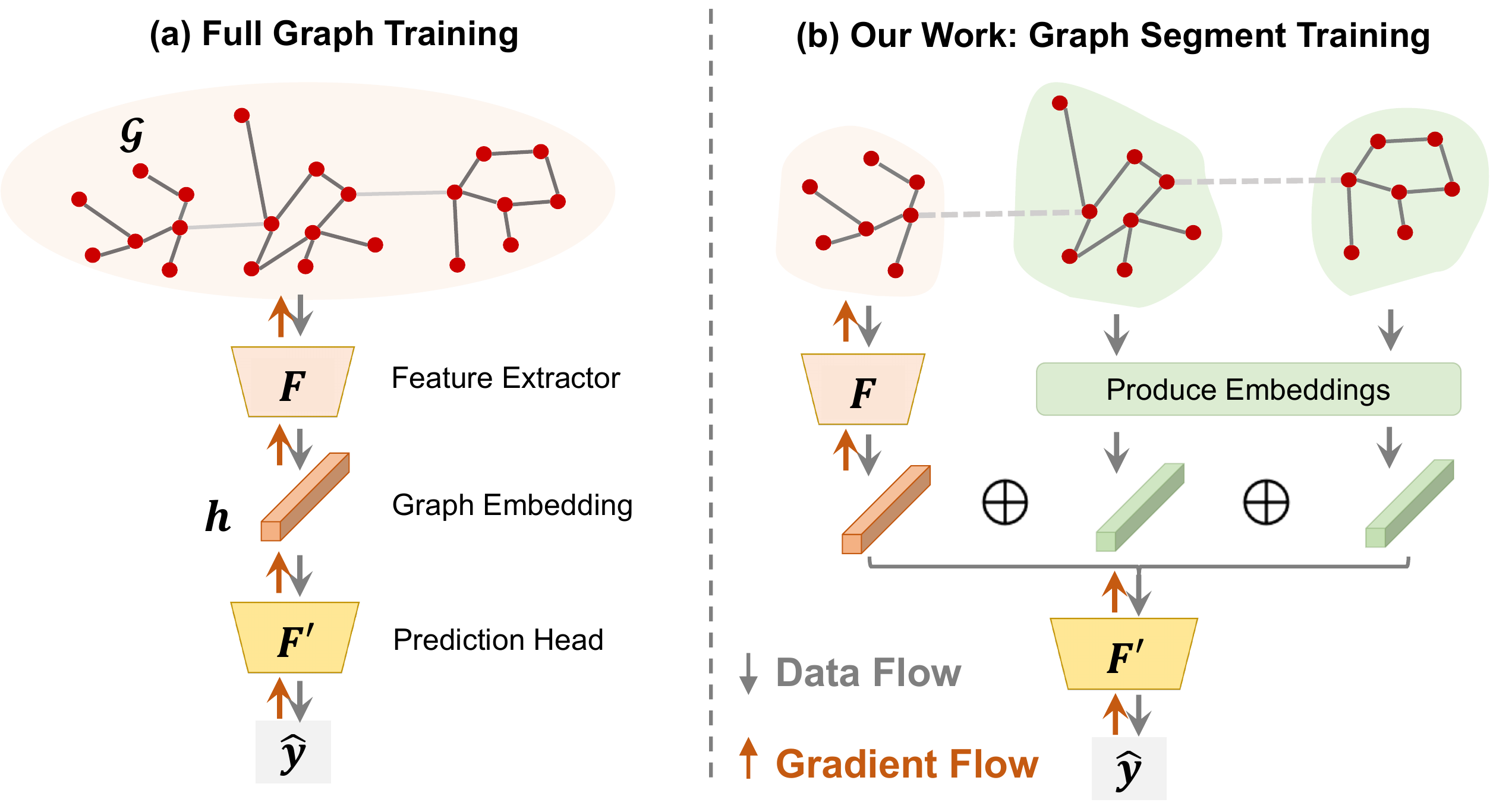}
    \caption{(a) \textbf{Full Graph Training}: Classically, models are trained using the entire graph, meaning all nodes and edges of the graph are used to compute gradients. For large graphs, this might be computationally infeasible. (b) \textbf{Graph Segment Training}: Our solution is to partition each large graph into smaller segments and select a random subset of segments to update the model; embeddings for the remaining segments are produced without saving their intermediate activations. The embeddings of all segments are combined to generate an embedding for the original large graph, which is then used for prediction. The important benefit is that GPU memory requirement only depends on the segment size (but not the full graph size).}
\end{figure*}

These popular graph property prediction tasks deal with relatively small graphs, so the scalability issue arises only from a large number of (small) graphs.

However, graph property prediction tasks also face another scalability challenge, which arises due to the large size of each individual graph, as some graphs can have millions or even billions of nodes and edges~\citep{freitas2021large}. 
Training typical Graph Neural Networks (GNNs) to classify such large graphs can be computationally infeasible, as the memory needed scales at least linearly with the size of the graph~\citep{zhang2022understanding}. This presents a challenge as even most powerful GPUs, which are optimized for handling large amounts of data, only have a limited amount of memory available.

Previous efforts to improve scalability of GNNs have mostly focused on developing methods for node-level and link-level prediction tasks, which can be performed using sampled subgraphs~\citep{hamilton2017inductive,chen2018fastgcn,huang2018adaptive,zeng2019graphsaint,zou2019layer,bojchevski2020scaling,markowitz2021gttf}.
However, there is a lack of research on how to train scalable models for \emph{property prediction of large graphs}.
Training a model on a sampled subgraph alone is insufficient for these types of tasks, as the subgraph sampled may not contain all the necessary information to make accurate predictions for the entire graph. For example, if the task is to predict the diameter of the graph, it is unlikely that a fixed-size subgraph would contain sufficient features for the GNN to make an accurate and reliable prediction. Thus, it is essential to aggregate information from the entire graph to predict a graph property.

In this paper, we address the problem of property prediction of large graphs. We propose Graph Segment Training (GST)\footnote{Source code available at \url{https://github.com/kaidic/GST}.}, which is able to train on large graphs with constant (GPU) memory footprint.

Our approach partitions each large graph into smaller segments with a controlled size in the pre-processing phase. During the training process, a random subset of segments is selected to update the model at each step, rather than using the entire graph. This way, we need to maintain intermediate activations for only a few segments for backpropagation; embeddings for the remaining segments are created without saving their intermediate activations. The embeddings of all segments are then combined to generate an embedding for the original large graph, which is used for prediction. Therefore, each large graph has an upper bound on memory consumption during training regardless of its original size. This allows us to train the model on large graphs without running into an out-of-memory (OOM) issue, even with limited computational resources. 

To accelerate the training process further, we introduce a historical embedding table to efficiently produce embeddings for graph segments that do not require gradients, as historical embeddings eliminate additional computation on such segments. However, the historical embeddings induce staleness issues during training, so we design two techniques to mitigate such issues in practice.
First, we characterize the input distribution mismatch issue between training and test stages of the prediction head, and propose to finetune only the prediction head at the end of training to close the input distribution gap.
Second, we identify bias in the loss function due to stale historical embeddings, and introduce \emph{Stale Embedding Dropout} to drop some stale embeddings during training to reduce this bias. Our final proposed method, called \oursfinal{}, is both memory-efficient and fast.

We evaluate our method on the following datasets: MalNet-Tiny, MalNet-Large and TpuGraphs. 
A typical full graph training pipeline (Full Graph Training) can only train on MalNet-Tiny, and unavoidably reaches OOM on MalNet-Large and TpuGraphs dataset. On the contrary, we empirically show that the proposed \oursabbrev{} framework successfully trains on arbitrarily large graphs using a single NVIDIA-V100 GPU with only 16GB of memory for MalNet-Large and four GPUs for TpuGraphs dataset, while maintaining comparable performance with Full Graph Training. We finally demonstrate that our complete method \oursfinal{} slightly outperforms \oursabbrev{} by another 1-2\% in terms of final evaluation metric, and simultaneously being 3x faster in terms of training time.

\section{Preliminaries}

\xhdr{Notation} For a function $f(\va): \R^{d_0} \rightarrow \R^{d_1}$, we use $D_{\va}^k f[\va_0] \in \R^{d_0 \times d_1}$ to denote its $k$-th derivative of $f$ with respect to $\va$ evaluated at value $\va_0$. Let $f \circ g$ to denote the composition of function $f$ with $g$. We use $\bigodot$ to denote entry-wise product, and let $\va^{\bigodot 2 } \overset{\Delta}{=} \va \bigodot \va$. 
We use $\bigoplus$ to represent aggregation. $\bigoplus_{j \le J} \va_j$ means aggregating the set $\{ \va_j \}_{j \le J}$, where $j$ indexes segments, and $J$ is the number of segments in a graph.  We usually drop $j$ subscript from $\bigoplus$ for brevity. $\va_i \bigoplus \vb_j$ aggregates both sets $\{ \va_i \}$ and $\{ \vb_j \}$ together. $\bigoplus$ can be mean or sum operators when applying to vectors. We define the input graph as $\gG = \{ \gV, \gE \}$, where $\gV = \{ v_1, ..., v_m \}$ is the node set and $\gE \subseteq \gV \times \gV$ is the edge set. 

Let dataset $\mathcal{D} = \{(\gG^{(i)}, y^{(i)}) \}_{i\le n}$ contain $n$ examples: each label $y^{(i)}$ is associated with $\gG^{(i)}$.

\xhdr{Graph Neural Network} We consider a backbone graph neural network $F$ that takes a graph $\gG^{(i)}$ and generates graph embedding $\vh^{(i)} \in \R^{d_h}$, followed by a final prediction head $F'$ that takes graph embedding $\vh^{(i)}$ and outputs final predictions: $    \widehat{y}^{(i)} = (F' \circ F)(\gG^{(i)}) $. We optimize GNN with the loss function as $\mathcal{L}((F' \circ F)(\gG^{(i)}), y^{(i)})$.

\xhdr{Historical Embedding Table} We define an embedding table 
$~\mathcal{T} : \mathcal{K} \rightarrow \mathbb{R}^{d_h}$, where key-space $\mathcal{K} \subset \mathbb{Z} \times \mathbb{Z}$ is a tuple: (graph index $i \le n$, segment index $j \le J$).
We use $\Tilde{\vh}_j^{(i)} = \mathcal{T}(i, j)$ to denote
embedding for graph segment $\gG_j^{(i)}$:
an embedding not up to date with the current backbone $F$. 

\section{Our Method: \oursfinal{}}

\subsection{Graph Segment Training (\oursabbrev{})}

Given a training graph dataset $ \mathcal{D}_\text{train} = \{ (\gG^{(i)}, y^{(i)}) \}_{i=1}^n$, a common SGD update step requires calculating the gradient:
\begin{align*}
\nabla_\theta \sum_{(\gG^{(i)}, y^{(i)}) \in \gB} \mathcal{L}((F' \circ F)(\gG^{(i)}), y^{(i)})
\end{align*}
where $\theta$ is trainable weights in $F' \circ F$, and $\gB$ is a sampled minibatch.
Graphs can differ in size (the number of nodes $| \mathcal{V}^{(i)} |$), with some being too large to fit into the device's memory. This is because the memory required to store all intermediate activations for computing gradients is proportional to the number of nodes and edges in the graph.

To address the above issue, we propose to partition each original input graph into a collection of graph segments, \ie,
\begin{align*}
    \gG^{(i)} \approx  \bigoplus \gG^{(i)}_j \textrm{ \ \  \ \  for } j \in \{1, 2, \dots, J^{(i)} \}
\end{align*}
An example of a partition algorithm is METIS~\citep{karypis1997metis}. This preprocessing step will result in a training set $\mathcal{D}_\text{train} = \{ ( \bigoplus_{j \le J^{(i)}} \gG^{(i)}_j, y^{(i)}) \}_{i=1}^n $. Number of partitions $J^{(i)}$ can vary across graphs, but the size of each graph segment can be bounded by a controlled size ($| \mathcal{V}_j^{(i)} | < m_{\text{GST}}, \forall (i,j)$) so that a batch of a fixed number of graph segments can always fit within the device's memory.

When processing graph segment $\gG_j^{(i)}$, we can obtain its segment embedding through the backbone: $\vh_j^{(i)} = F(\gG_j^{(i)})$. The prediction head $F'$ requires information from the whole graph to make the prediction, thus we propose to aggregate all segment embeddings to recover the full graph embedding: $\vh^{(i)} = \bigoplus \vh_j^{(i)}$. A simple realization of this aggregation is mean pooling. Note that na\"ively applying the prediction head $F'$ on the aggregated graph embedding --- $\widehat{y}^{(i)} = F' (\bigoplus \vh_j^{(i)})$ --- would not provide any reduction in peak memory consumption, as we need to keep track of the activations of all graph segments $\{ \gG_j^{(i)} \}_{j \in J^{(i)}}$ to perform backpropagation.

Thus, we propose to perform backpropagation on only a few randomly sampled graph segments
$\mathcal{S}^{(i)} \subseteq \{1, \dots, J^{(i)}\} $
and generate embeddings without requiring gradients for the rest. We hereby denote $\vh_s$ and  $\bar{\vh}_j$ as segment embeddings that require and do not require gradient, respectively. 
An entire graph embedding is then: $\vh^{(i)} \approx \{\vh_s^{(i)}\}_{s \in \mathcal{S}^{(i)}} \bigoplus \{\bar{\vh}_j^{(i)} \}_{j \notin \mathcal{S}^{(i)}} \triangleq \vh_s^{(i)} \bigoplus \bar{\vh}_j^{(i)} $.
We name this general pipeline as \oursabbrev{} and summarize it in Algorithm~\ref{alg:GST}.

\begin{algorithm}[h]
\caption{General Framework of \oursabbrev{}} 
\label{alg:GST}
\begin{algorithmic}[1]
\REQUIRE A preprocessed training graph dataset $ \mathcal{D}_\text{train} = \{ ( \bigoplus \gG^{(i)}_j, y^{(i)}) \}_{i=1}^n$. A parameterized backbone $F$ and a prediction head $F'$.
\FOR{$t$ = 1 to $T_0$}
    \STATE $\mathcal{B} \leftarrow \text{SampleMiniBatch}( \mathcal{D}_{\text{train}})$
    \FOR{$(\gG^{(i)}, \widehat{y}^{(i)})$ in $\mathcal{B}$}
        \STATE $\{\gG_s^{(i)}\}_{s \in \mathcal{S}^{(i)}} \leftarrow \text{SampleGraphSegments}(\gG^{(i)})$
            \STATE $\bar{\vh}_j^{(i)} \leftarrow \text{ProduceEmbedding}(\gG_j^{(i)}) \text{ for } j \notin \mathcal{S}^{(i)}$
        \STATE $\vh_s^{(i)} \leftarrow F(\gG_s^{(i)})$ \textbf{ \ \ for } $s \in \mathcal{S}^{(i)}$
    \ENDFOR
    \STATE $  \text{SGD on loss} \leftarrow \frac{1}{| \mathcal{B} |} \sum_{i} \mathcal{L}\left(F'(\vh_s^{(i)} {\bigoplus} \bar{\vh}_j^{(i)} ), \widehat{y}^{(i)} \right)  $
\ENDFOR
\end{algorithmic}
\end{algorithm}

One implementation of $\text{ProduceEmbedding}(\cdot)$ in Algorithm~\ref{alg:GST} is to use the same feature encoder $F$ to forward all the segments in $ \{ \gG_j^{(i)} \}_{j \notin \mathcal{S}^{(i)}}$ without storing any intermediate activation (by stopping gradient).

\subsection{GST with Historical Embedding Table}

Calculating $\bar{\vh}_j$ by stopping gradient guarantees an upper bound on peak memory consumption. However, since we do not need gradients for segments $\{\gG^{(i)}_j \}_{j \notin \mathcal{S}^{(i)}}$, computing forward pass on these segments can be avoided to make training faster.
To achieve this, we use historical embeddings acquired 
in previous training iterations $\Tilde{\vh}_j^{(i)} = \mathcal{T}(i,j)$.
With an embedding table $\mathcal{T}$, one can implement $\text{ProduceEmbedding}(\cdot)$ by fetching the corresponding embedding from the table without any computation. We update the embedding table after conducting the forward pass on a graph segment. We optimize the following loss $\mathcal{L}(F'(\vh_s^{(i)} {\bigoplus}\Tilde{\vh}_j^{(i)}), y^{(i)} )$ during training. We name the embedding version of our algorithm as \oursembed{}.

Note that \oursembed{} has runtime advantages over \oursall{}. For each segment $\gG^{(i)}_j$ that does not require gradient, \oursall{} needs to run an actual forward pass over $\gG^{(i)}_j$, while GST+E only needs to fetch the embedding from a hash table. \oursembed{} has a small overhead from writing the updated embedding of $\gG^{(i)}_s$ back into the table $\mathcal{T}$, which is relatively quick and can be run in a separate thread so that it does not impede the main training algorithm until the next iteration. 

The drawback of \oursembed{} is that historical embeddings from $\mathcal{T}$ may be stale; 

$ \Tilde{\vh}_j^{(i)} $ can be the result of an out-dated feature extractor $F$.
This type of staleness can hurt the training in various ways. Below, we provide two techniques to mitigate the staleness issue.

\subsection{Prediction Head Finetuning}

Let's compare the input-output distribution of the prediction head $F'$ during the training and inference stage.
We have training distribution $\mathcal{P}_{\text{train}}(\vh, y) = \mathcal{P}(\vh_s {\bigoplus} \Tilde{\vh}_j, y)$ and test distribution $\mathcal{P}_{\text{test}}(\vh, y) =\mathcal{P}(\bigoplus \vh_j, y)$. Regardless of the innate distribution shift between the training and test stage of the dataset, we note that stale historical embeddings can further widen the gap between the training and test distributions. In this case, the minimizer of the expected training loss does not minimize the expected
test loss:
\begin{align*}
    &\argmin_{\theta} \mathbb{E}_{(\vh,y)\sim \mathcal{P}(\vh_s {\bigoplus} \Tilde{\vh}_j, y)} \mathcal{L}(F'(\vh), y) 
    \neq
    \argmin_{\theta} \mathbb{E}_{(\vh,y)\sim \mathcal{P}(\bigoplus \vh_j, y)} \mathcal{L}(F'(\vh), y)
\end{align*}

To mitigate the distribution misalignment, we introduce the Prediction Head Finetuning technique. 
Concretely, at the end of training, we update each embedding $\vh_j^{(i)}$ in the embedding table $\mathcal{T}$ by forwarding each graph segment in the training set with the most current feature encoder $F$. We then finetune only the prediction head $F'$ with all the input embeddings up-to-date. We use \oursfinetune{} to denote \oursembed{} refined with the Prediction Head Finetuning technique. 

The overhead from the finetuning is minimal, as we need to update the embedding table $\mathcal{T}$ only once. The rest of the finetuning stage does not involve the notoriously slow graph convolution because the prediction head $F'$ is simply a multi-layer perceptron.

\subsection{Stale Embedding Dropout}

The finetuning technique primarily addresses the negative impact of stale embeddings on prediction head $F'$. 
However, the staleness also impacts the backbone $F$.
Prior works studying the effects of stale historical embeddings commonly assume that historical embeddings do not become too stale, \ie, $\| \Tilde{\vh}_j^{(i)} - \bar{\vh}_j^{(i)} \| \le \epsilon, \forall (i, j)$. Given this assumption, if the neural network $(F' \circ F) (\cdot)$ is $k$-Lipschitz continuous, the gradients will also be bounded and never run too far from its true estimation, \ie, $\| \nabla F'(\Tilde{\vh}_j^{(i)}) - \nabla F'(\bar{\vh}_j^{(i)}) \| \le k' \cdot \epsilon$. Thus, the network can often converge to the similar local minima even when using historical embeddings.

The above assumption does not hold under our GST+E framework. The rationale is that 
$\Tilde{\vh}_j^{(i)}$ gets updated infrequently in the embedding table $\mathcal{T}$. Suppose we iterate through every graph $\gG^{(i)}$ in the training set $\mathcal{D}_\text{train}$ for each epoch, every time we train on a graph $\gG^{(i)} \approx \bigoplus \gG^{(i)}_j $, only a few graph segments $\gG^{(i)}_s $ will be updated in the table $\mathcal{T}$. This implies that all the other segment embeddings of $\gG^{(i)}$ will be at least $n$-iteration stale, with $n$ being the number of graphs in the training set, and the most outdated segment embedding could be approximately $nJ^{(i)}/S^{(i)}$-iteration stale, where $S^{(i)}$ is $|\mathcal{S}^{(i)}|$.

This staleness introduces an additional source of bias and variance to the stochastic optimization; the loss function calculated with historical embeddings is no longer an unbiased estimation of its true value. To mitigate the negative impact of historical embeddings on loss function estimation, we propose the second technique, \emph{Stale Embedding Dropout} (SED). Unlike a standard Dropout, which uniformly drops elements and weighs up the rest, we propose to drop only stale segment embeddings and weigh up only segment embeddings that are up-to-date. Concretely, assume with the keep probability $p$, the weight $\eta$ for each segment is defined as:
\begin{align}
\label{eq:SED}
    \eta^{(i)} =
    \begin{cases}
     p + (1-p)\frac{J^{(i)}}{S^{(i)}}& \text{for } \gG^{(i)}_s \\
      0 & \text{for } \gG^{(i)}_j \text{, with prob. } (1-p) \\
      1  &  \text{for } \gG^{(i)}_j \text{, with prob. } p \\
    \end{cases}       
\end{align}
Please refer to the theoretical analysis in the next section. By combining the two proposed techniques, we denote our final algorithm as \oursfinal{}, which we summarized in Algorithm~\ref{alg:GST-ETF-SED} in Appendix~\ref{sec:implementation}.

\section{Theoretical Analysis}

We characterize the effect of stale historical embeddings by studying the difference between $\mathcal{L}(F'(\vh_s^{(i)} {\bigoplus}\Tilde{\vh}_j^{(i)}))$ and $\mathcal{L}(F'(\bigoplus\vh_j^{(i)}))$.
Assume $\gG^{(i)}$ has $J^{(i)}$ segments and we perform backpropagation on $S^{(i)}$ segments. 
We let $ \delta^{(i)} \triangleq  \vh_s^{(i)}{\bigoplus}\Tilde{\vh}_j^{(i)} - \bigoplus\vh_j^{(i)}$ be the perturbation on the graph embedding. 

 We apply Taylor expansion around $\delta^{(i)}=0$ on the final loss to analyze the effect of this perturbation. 
\begin{align}
\label{eq:taylor}
&\mathbb{E}_s \mathcal{L}(F'(\vh_s^{(i)} {\bigoplus}\Tilde{\vh}_j^{(i)})) -  \mathcal{L}(F'({\bigoplus}\vh_j^{(i)})) \\
=&  \mathbb{E}_s \mathcal{L}( F'({\bigoplus}\vh_j^{(i)} + \delta^{(i)}) ) -  \mathcal{L}(F'({\bigoplus}\vh_j^{(i)}))   \nonumber \\
\approx & \sum_j \mathbb{E}_{\delta_j^{(i)}} \underbrace{D_{\vh_j^{(i)}} (\mathcal{L} \circ F')[\vh_j^{(i)}]\delta_j^{(i)}}_{B} +  \underbrace{\frac{1}{2} {\delta_j^{(i)}}^\top (D^2_{\vh_j^{(i)}} (\mathcal{L} \circ F')[\vh_j^{(i)}]) \delta_j^{(i)}}_{R} \nonumber
\end{align}

In the equation above, the first-order term acts as a bias term introduced by the stale historical embedding, and the second-order term acts as an additional regularizer. 

Let ET denote using the embedding table without applying SED. We analyze the effect of ET and SED under the Taylor Expansion in Eq.~\ref{eq:taylor} by substituting different $\delta^{(i)}$. 
For the first term, we have
\begin{align*}
    \mathbb{E}_{\delta_j^{(i)^\textrm{ET}}}[B] &= C \times \frac{J^{(i)}-S^{(i)}}{J^{(i)}} \mathbb{E}(\tilde{\vh}_j^{(i)} - \vh_j^{(i)} ) \\
    \mathbb{E}_{\delta_j^{(i)^\textrm{SED}}}[B] &= C \times  \frac{J^{(i)}-S^{(i)}}{J^{(i)}} \mathbb{E}(\tilde{\vh}_j^{(i)} - \vh_j^{(i)} ) p
\end{align*}
where $C$ is a constant matrix. 

We extend the above analysis to the following theorem. Please find the complete proof in Appendix~\ref{sec:proof}.

\begin{theorem}
\label{thm:1}
Under proper conditions,
SED with a keep ratio $p$ ensures to reduce bias term introduced by historical embeddings by a factor of $p$, while introducing another regularization term.
\end{theorem}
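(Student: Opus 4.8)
My starting point is the second-order Taylor expansion of Eq.~\ref{eq:taylor}, which already splits the loss discrepancy $\mathbb{E}_s \mathcal{L}(F'(\vh_s^{(i)}{\bigoplus}\Tilde{\vh}_j^{(i)})) - \mathcal{L}(F'({\bigoplus}\vh_j^{(i)}))$ into a first-order bias term $B$ and a second-order term $R$. The two assertions of the theorem map onto these two pieces: I will show that replacing the plain embedding-table perturbation $\delta_j^{(i)^{\mathrm{ET}}}$ by the SED perturbation $\delta_j^{(i)^{\mathrm{SED}}}$ scales the expected bias $\mathbb{E}[B]$ by the keep ratio $p$, while inflating $\mathbb{E}[R]$ by an extra non-negative quadratic term. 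All expectations are taken jointly over the random segment sample $\mathcal{S}^{(i)}$ and over the independent keep/drop variables that define the weights $\eta^{(i)}$ in Eq.~\ref{eq:SED}.

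\textbf{The bias term.} First I would write the SED-aggregated embedding as $\frac{1}{J^{(i)}}\big[\sum_{s}\eta_s\vh_s^{(i)} + \sum_{j}\eta_j\Tilde{\vh}_j^{(i)}\big]$ and form $\delta^{(i)^{\mathrm{SED}}}$ against the fully fresh aggregate $\frac{1}{J^{(i)}}\sum_j \vh_j^{(i)}$. Collecting per-segment contributions, each fresh segment carries the up-weight $\eta_s-1=(1-p)\tfrac{J^{(i)}-S^{(i)}}{S^{(i)}}$, each kept stale segment (probability $p$) carries $\Tilde{\vh}_j^{(i)}-\vh_j^{(i)}$, and each dropped stale segment (probability $1-p$) carries $-\vh_j^{(i)}$. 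The crucial algebraic observation is that the total expected up-weight placed on the fresh segments, $(1-p)(J^{(i)}-S^{(i)})$, matches exactly the total expected weight stripped from the dropped stale segments. Under the proper condition that fresh and stale segment embeddings share the same conditional mean, these two compensation terms cancel, and what survives is precisely $p$ times the embedding-table residual $\tfrac{1}{J^{(i)}}\sum_{j\notin\mathcal{S}^{(i)}}(\Tilde{\vh}_j^{(i)}-\vh_j^{(i)})$. Since $B$ is linear in $\delta_j^{(i)}$, this gives $\mathbb{E}_{\delta^{\mathrm{SED}}}[B]=p\,\mathbb{E}_{\delta^{\mathrm{ET}}}[B]$, reproducing the two displayed identities and establishing the factor-$p$ bias reduction.

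\textbf{The regularization term.} For the quadratic piece I would apply the bias--variance split $\mathbb{E}[R]=\tfrac12\big(\mathbb{E}[\delta_j^{(i)}]^\top H_j\,\mathbb{E}[\delta_j^{(i)}] + \Tr(H_j\,\Cov(\delta_j^{(i)}))\big)$ with $H_j = D^2_{\vh_j^{(i)}}(\mathcal{L}\circ F')[\vh_j^{(i)}]$. The plain embedding table injects no dropout randomness, so its covariance stems only from segment sampling; SED adds the Bernoulli variance $\Var(\eta_j)=p(1-p)$ on each stale segment, which propagates through $\Cov(\delta_j^{(i)})$ as an extra contribution of order $p(1-p)\,\Tilde{\vh}_j^{(i)}\Tilde{\vh}_j^{(i)\top}$ per segment. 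This surfaces in $\mathbb{E}[R]$ as an additional quadratic term $\tfrac12 p(1-p)\sum_{j\notin\mathcal{S}^{(i)}}\Tr\!\big(H_j\,\Tilde{\vh}_j^{(i)}\Tilde{\vh}_j^{(i)\top}\big)$ (up to the aggregation normalization) that is absent for ET. Under the proper condition $H_j\succeq 0$ in the relevant region --- \ie, a locally convex loss-head composition --- this term is non-negative and therefore acts as a genuine regularizer, the ``another regularization term'' promised by the statement. I would finish by summing over $j\le J^{(i)}$ to recover the global loss discrepancy.

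\textbf{Main obstacle.} The delicate part is the bookkeeping of the joint expectation over sampling and dropout, and in particular verifying the exact cancellation between the up-weighted fresh segments and the dropped stale segments; this is where the specific constant $p+(1-p)\tfrac{J^{(i)}}{S^{(i)}}$ in Eq.~\ref{eq:SED} is indispensable and where the ``proper conditions'' must be pinned down: matched first moments for the bias cancellation, bounded second moments and a positive-semidefinite Hessian $H_j$ for the regularizer interpretation, and a controlled Taylor remainder so that terms beyond second order are negligible. Isolating the clean factor $p$ on the residual stale-versus-fresh gap, rather than on the raw embeddings, is the crux of the argument.
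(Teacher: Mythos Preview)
Your overall strategy --- plug the SED perturbation into the first- and second-order terms of Eq.~\ref{eq:taylor} and compare against the plain embedding-table baseline --- is exactly the paper's route. However, two of your three ``proper conditions'' are misplaced, and one of them hides a real computational error.

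\textbf{Bias term.} The cancellation you describe does \emph{not} require that fresh and stale segment embeddings share the same conditional mean. The paper takes the expectation over the random choice of $\mathcal{S}^{(i)}$ \emph{per segment}: segment $j$ is sampled with probability $S^{(i)}/J^{(i)}$, and conditional on not being sampled it is dropped with probability $1-p$. The up-weight case therefore contributes $\tfrac{S^{(i)}}{J^{(i)}}\cdot\tfrac{(1-p)(J^{(i)}-S^{(i)})}{S^{(i)}}\vh_j^{(i)}$ and the dropped case contributes $-\tfrac{(1-p)(J^{(i)}-S^{(i)})}{J^{(i)}}\vh_j^{(i)}$, which cancel identically for every $j$ with no distributional assumption. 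What survives is exactly the $p\cdot\tfrac{J^{(i)}-S^{(i)}}{J^{(i)}}(\tilde{\vh}_j^{(i)}-\vh_j^{(i)})$ residual. The only ``proper condition'' the paper actually invokes is $W\tilde{\vh}_j^{(i)}\approx 0$ and $W\vh_j^{(i)}\approx 0$ (with $W$ the first linear layer of $F'$), whose sole purpose is to control the Taylor remainder --- closer to your third bullet than your first.

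\textbf{Regularization term.} Here your bias--variance split is fine in principle, but your execution misses a piece. You attribute the extra second-order contribution entirely to the Bernoulli variance $p(1-p)$ of the stale weights and arrive at a term in $\tilde{\vh}_j^{(i)}{\tilde{\vh}_j^{(i)}}^{\top}$. The paper instead computes $\mathbb{E}[\delta_j^{(i)}\delta_j^{(i)\top}]$ directly over the three-way outcome (sampled/up-weighted, kept, dropped); the up-weighting case contributes $\big(\tfrac{(1-p)(J^{(i)}-S^{(i)})}{S^{(i)}}\big)^{2}\vh_j^{(i)}{\vh_j^{(i)}}^{\top}$ with probability $S^{(i)}/J^{(i)}$, and combining it with the dropped case yields the extra term
\[
\frac{(J^{(i)}-S^{(i)})(1-p)\big(J^{(i)}-pJ^{(i)}+pS^{(i)}\big)}{2J^{(i)}S^{(i)}}\,{\vh_j^{(i)}}^{\bigodot 2},
\]
which involves the \emph{fresh} embedding $\vh_j^{(i)}$, not $\tilde{\vh}_j^{(i)}$, and whose coefficient is not $p(1-p)$. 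Your covariance bookkeeping conditions on a fixed $\mathcal{S}^{(i)}$ and thereby drops the variance injected by the random sampling together with the deterministic up-weight $\eta_s$; once you restore that piece the two computations agree. The paper also does not need $H_j\succeq 0$ --- it only claims an additional quadratic term appears, not that it is nonnegative.
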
 

Theorem~\ref{thm:1} indicates that SED can reduce the bias in the loss function introduced by the stale historical embeddings, at the cost of another regularization term, which might potentially increase total regularization. 
Prior works commonly make a hidden assumption that $\mathbb{E} (\tilde{\vh}_j^{(i)} - \vh_j^{(i)})$ is so small that the negative effect may be neglected. In our setting, the historical embeddings can be very stale, so having a $p$ factor helps reduce bias.
It is worthwhile to check the limiting cases. If $p=1$ (keeping all the stale embeddings without dropping any), both $\mathbb{E}_{\delta^{(i)^\textrm{SED}}_j}[B]$ and $\mathbb{E}_{\delta^{(i)^\textrm{SED}}_j}[R] $ degrade to the result of ET. If $p=0$ (droping all the stale embeddings), then the algorithm degrades to training on only $S^{(i)}$ segments without aggregating other segments (which we denote as \oursone{}, when $S^{(i)}=1$). $\mathbb{E}_{\delta^{(i)^\textrm{SED}}_j}[B] = 0$ indicates that there is no stale bias in this case. However,
the term $\mathbb{E}_{\delta^{(i)^\textrm{SED}}_j}[R] $ could become too large so that it impedes training.

\section{Experiments}
\subsection{Experimental Setup}
\label{sec:experimental setup}

\xhdr{Datasets} MalNet~\citep{freitas2021large} is a large-scale graph representation learning dataset, with the goal to predict the category of a function call graph. MalNet is the largest public graph database constructed to date in terms of average graph size. Its widely-used split is called \emph{MalNet-Tiny}, containing 5,000 graphs across balanced 5 types, with each graph containing at most 5,000 nodes. To evaluate our approach on the regime where the graph size is large, we construct an alternative split from the original MalNet dataset, which we named \emph{MalNet-Large}. \emph{MalNet-Large} also contains 5,000 graphs across balanced 5 types. \emph{MalNet-Large}'s average graph size reaches 47k with the largest graph containing 541k nodes. We will release our experimental split for \emph{MalNet-Large} to promote future research. 

TpuGraphs is an internal large scale graph regression dataset, whose goal is to predict an execution time of an XLA's HLO graph with a specific compiler configuration on a Tensor Processing Unit (TPU). XLA \cite{XLA} is a production backend compiler for various machine learning frameworks, including TensorFlow, PyTorch, and JAX.  In this particular dataset, the compiler configuration controls physical layouts of tensors in the graph, and the runtime is measured on TPU v3 \cite{TPU-training}. This dataset cares more about the ranking of the configurations for each graph than the absolute runtimes, since the ultimate goal is to use a model to select the best configuration for each graph.\footnote{We have made public a dataset~\citep{phothilimthana2023tpugraphs} that closely parallels our internal dataset.}
TpuGraphs is similar to the dataset used in \cite{tpu-lcm}, but the runtime prediction is at the entire graph level rather than at the kernel (subgraph) level.  TpuGraphs contains 5,153 HLO graphs and a total of 757,375 unique pairs of graphs and configurations. The average graph size is 38k, and the maximum is 615k. From the perspective of \oursabbrev{}, a graph together with a configuration defines one $\gG^{(i)}$ because the configuration is featurized as parts of input node features to the GNN.

Please refer to Table~\ref{tab:datasetstats} in Appendix for detailed statistics.

\xhdr{Methods}
We test combinations of the following proposed techniques and some baselines. 
(1)~Full Graph Training: we train on all graphs in their original scale without applying any partitioning beforehand. 
(2)~\oursone{}: we partition the original graph into a collection of graph segments $\gG^{(i)} \approx  \bigoplus \gG^{(i)}_j $, but we randomly select only one segment $\gG^{(i)}_j$ for each graph to train every iteration. 
(3)~\oursall{}: following the general GST framework described in Algorithm~\ref{alg:GST}, we replace $\text{ProduceEmedding}(\cdot)$ by using the same feature encoder $F$ to forward all the segments in $\{ \gG_j^{(i)} \}_{j \notin \mathcal{S^{(i)}}}$ without storing any intermediate activation. We set $S^{(i)} = 1$ in our experiments.
(4)~E: we introduce an embedding table $\Tilde{\vh}_j^{(i)} = \mathcal{T}(i,j)$ to store the historical embedding of each graph segment, and we fetch the embedding from $\mathcal{T}$ if we do not need to calculate gradient for the corresponding segment. 
(5)~F: in addition to introducing the embedding table $\mathcal{T}$, we finetune the prediction head $F'$ with all up-to-date segment embeddings at the end of training. 
(6)~D: 
we apply SED defined in Eq.~\ref{eq:SED} during training.

When these techniques are combined, we concatenate the acronyms with a ``+'' to \oursabbrev{} as an abbreviation. We conduct all the experiments on MalNet with a single NVIDIA-V100 GPU with 16GB of memory, and four NVIDIA-V100 GPUs (for data parallelism) with 16GB of memory for TpuGraphs.
Please refer to \Cref{sec:implementation} for additional implementation details.

\subsection{Empirical Results on MalNet}

\begin{table*}[t]
\centering
\caption{Test accuracy on MalNet-Tiny and MalNet-Large. We report the standard deviation over five runs. \oursfinal{} achieves better accuracy than Full Graph Training, and \oursall{}, while being much more memory efficient and computationally faster.}
\label{tab:malnet}
\fontsize{8}{8}\selectfont
\begin{tabular}{c|ccc|ccc}
\toprule 
Dataset  & \multicolumn{3}{c|}{MalNet-Tiny} & \multicolumn{3}{c}{MalNet-Large} \\
Backbone &     GCN      &    SAGE      &     GraphGPS     &     GCN      &      SAGE     &     GraphGPS     \\
\midrule
Full Graph Training &     87.84$\pm$1.37     &     88.08$\pm$1.68         &     90.82$\pm$0.59         &        OOM       &          OOM    &     OOM      \\
\oursall{} & 88.26$\pm$0.80& 88.42$\pm$1.03& 91.03$\pm$0.81 & 88.35$\pm$1.14  & 88.62$\pm$0.82 &91.39$\pm$0.85 \\
\oursone{} & 71.62$\pm$3.85 & 72.64$\pm$ 4.73 & 77.63$\pm$3.15 & 60.41$\pm$6.29& 57.13$\pm$7.36 & 66.82$\pm$4.71\\
\oursembed{} & 86.53$\pm$1.18 & 86.82$\pm$0.93 & 89.75$\pm$0.89 & 48.42$\pm$6.61 & 43.28$\pm$7.01 & 62.47$\pm$3.19\\
\oursfinetune{}  &87.67$\pm$0.78 & 87.83$\pm$0.81 & 90.52$\pm$0.71 & 84.83$\pm$0.96 & 85.26$\pm$0.87 & 91.33$\pm$0.65 \\
GST+ED & 88.18$\pm$0.48 & 88.50$\pm$0.74 & 90.96$\pm$0.68 & 82.17$\pm$4.74 & 71.83$\pm$6.31 & 89.46$\pm$1.36 \\
\oursfinal{} & 88.78$\pm$0.45 & 89.24$\pm$0.53 & 92.46$\pm$ 0.66 & 89.67$\pm$0.71 & 89.78$\pm$0.68 & 92.52$\pm$0.58 \\
\bottomrule
\end{tabular}
\end{table*}

To demonstrate the general applicability of our proposed \oursabbrev{} framework, we consider three backbones, namely, GCN~\citep{kipf2016semi}, SAGE~\citep{hamilton2017inductive}, and GraphGPS~\citep{rampavsek2022recipe}. GCN and SAGE are two popular GNN architectures. GraphGPS is a Graph Transformer that recently achieves state-of-the-art performance on many graph-level tasks, but is well-known for its issue on scalability. We report the top-1 test accuracy of various methods on MalNet-Tiny and MalNet-Large in Table~\ref{tab:malnet}. We include MalNet-Tiny in this study because its graphs are relatively small so that it is still possible to run Full Graph Training. 

Notably, we observe that \oursabbrev{} slightly outperforms Full Graph Training in terms of test accuracy on MalNet-Tiny. \oursabbrev{} has exactly the same number of weight parameters with Full Graph Training. This implies that \oursabbrev{} potentially has a better hierarchical graph pooling mechanism that leads to better generalization. 
As we step from MalNet-Tiny to MalNet-Large, Full Graph Training strategy can no longer fit the large graphs on a GPU, so we report OOM in the table. \oursall{}'s estimation on graph segment embeddings $\bar{\vh}_j^{(i)}$ that do not require gradients is accurate, and thus does not suffer from  staleness issues. 
Therefore, we use \oursabbrev{} as an estimation for the performance of Full Graph Training on MalNet-Large.

Na\"ively training on only one graph segment (\oursone{}) yields inferior performance than Full Graph Training and \oursall{}, showing that it is essential to aggregate embeddings from all graph segments. Solely introducing the historical embedding table (\oursembed{}) significantly deteriorates the optimization due to the  staleness issue. Each of the proposed techniques (Prediction Head Finetuning and SED) individually is beneficial in combating the staleness issue.
The combination of our two techniques (\oursfinal{}) achieves the best performance, slightly outperforming \oursabbrev{} by another 1-2\% in terms of final evaluation metric.

\subsection{Empirical results on TpuGraphs dataset}
\begin{table}[h]
\begin{minipage}[b]{0.48\linewidth}
\centering
\caption{Training and test ordered pair accuracy (OPA) on TpuGraphs Dataset.}
\label{tab:hlo}
\fontsize{8}{8}\selectfont
\begin{tabular}{c|cc}
\toprule
           & Train OPA & Test OPA \\
           \midrule
Full Graph Training          &      OOM          &       OOM        \\
\oursall{}    &          87.43      &      86.84         \\
\oursone{}    &         67.19       &      81.03         \\
\oursembed{}     &       74.02         &      82.62         \\
\oursfinal{} &         71.40       &    89.20           \\
\bottomrule
\end{tabular}
\end{minipage}
\begin{minipage}[b]{0.48\linewidth}
\centering
\caption{Runtime analysis (average training time per iteration in ms) on MalNet-Large dataset.}
\label{tab:runtime}
\fontsize{8}{8}\selectfont
\begin{tabular}{c|ccc}
\toprule
           & GCN & SAGE & GraphGPS \\
           \midrule
Full Graph Training     & OOM   & OOM       &  OOM  \\
\oursall{}    & 720.8   & 706.3       &  1285.7  \\
\oursone{}    &  242.2  &   239.6   & 441.4 \\
\oursembed{}     &    258.4    & 253.6  &   451.9    \\
\oursfinal{} &    247.9     &  244.7  &  448.2     \\
\bottomrule
\end{tabular}
\end{minipage}
\end{table}

As mentioned in Section~\ref{sec:experimental setup}, we care more about the ranking of configurations for each graph than the absolute runtimes in this dataset. Thus,
we report the ordered pair accuracy (OPA) averaged over all computational graphs in Table~\ref{tab:hlo}. OPA metric is defined as:
\begin{align*}
    \text{OPA}(y, \widehat{y}) = \frac{\sum_i \sum_j \mathbb{I}[\widehat{y}_i > \widehat{y}_j] \cdot \mathbb{I}[y_i > y_j] }{\sum_i \sum_j \mathbb{I}[y_i > y_j]}
\end{align*}
With some compiler domain knowledge, we found it better to predict each graph segment's runtime first and then aggregate them using sum pooling. 
This means the prediction head is part of $F$, and $F'$ is simply a summation function. Since there are no learnable weights in $F'$, we omit Prediction Head Finetuning in this experiment, and \oursfinal{} in Table~\ref{tab:hlo} excludes the finetuning stage.
We observe a clear tradeoff between fitting training examples and generalization.
First, \oursall{} has a much higher training OPA than the other methods, indicating accurate estimation on segments that do not require gradient is essential for training OPA. Training with one segment only or with the embedding table yields lower training OPA, and consequently lower test OPA as well. SED in \oursfinal{} functions as a regularization technique. Although it slightly lowers the training OPA compared to \oursembed{}, it achieves better test OPA, even than \oursall{}, due to bias mitigation.

\subsection{Ablation Studies}

\begin{figure}
\begin{minipage}[b]{0.32\linewidth}
\centering
\includegraphics[width=\textwidth]{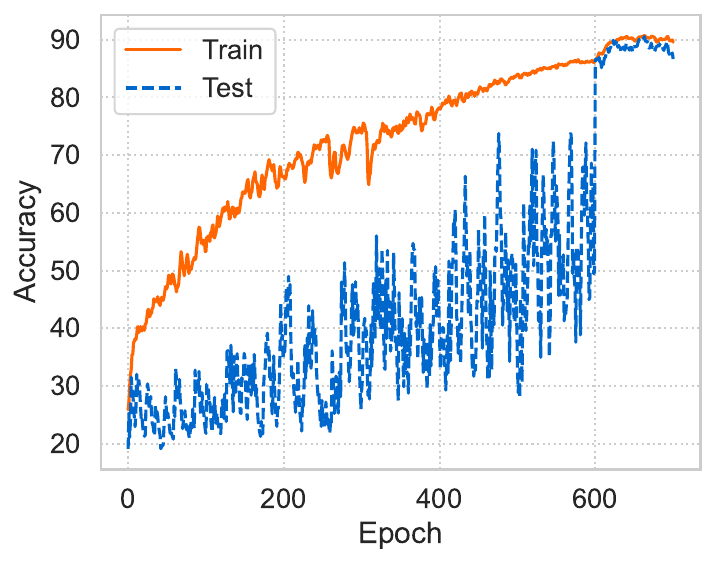}
\caption{Accuracy curve on MalNet-Large of \oursfinal{} with SAGE backbone. We start Prediction Head Finetuning at epoch 600.}
\label{fig:finetune}
\end{minipage}
\hspace{0.1cm}
\begin{minipage}[b]{0.32\linewidth}
\centering
\includegraphics[width=\textwidth]{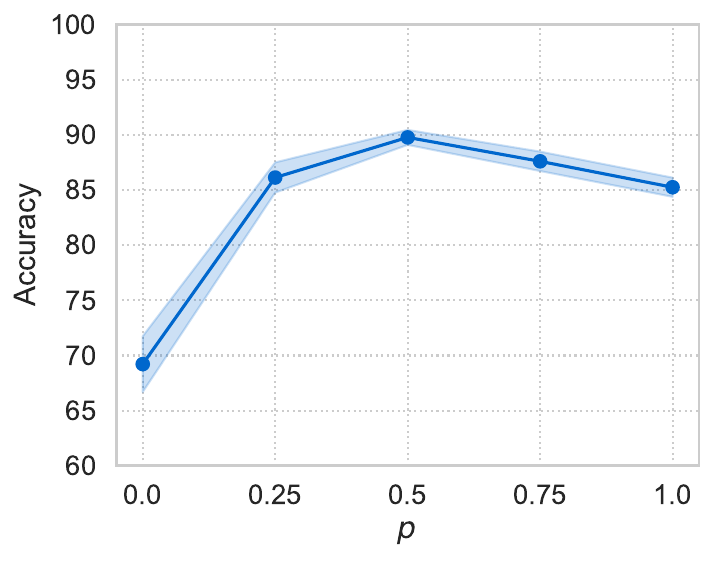}
\caption{Ablation study on the keep ratio $p$ in SED. We report test accuracy of \oursfinal{} with SAGE backbone on MalNet-Large for 5 runs.}
\label{fig:drop_p}
\end{minipage}
\hspace{0.1cm}
\begin{minipage}[b]{0.32\linewidth}
\centering
\includegraphics[width=\textwidth]{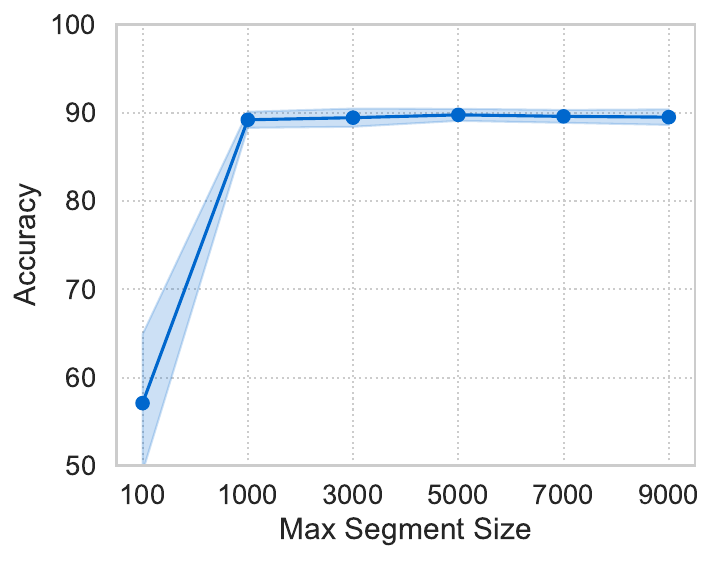}
\caption{Ablation study on maximum segment size. We report test accuracy of \oursfinal{} with SAGE backbone on MalNet-Large for 5 runs. }
\label{fig:segsize}
\end{minipage}
\end{figure}

\xhdr{Effect of finetuning} We visualize the training/test accuracy curve of \oursfinal{} over time in Figure~\ref{fig:finetune}. The staleness introduced by historical embeddings drastically hurts generalization, as shown for the first 600 epochs. We start finetuning at epoch 600, and the gap between training and test accuracy decreases by a large margin instantly. 

\xhdr{Ablation study on segment dropout ratio} To analyze the effect of the keep ratio $p$ in SED, we vary its value from 0 to 1 and visualize the results in Figure~\ref{fig:drop_p}. When $p=1$, \oursfinal{} degrades back to using the historical embedding table without SED, as the performance decreases due to staleness. When $p=0$,  \oursfinal{} becomes \oursone{}, where we drop all the stale historical embeddings. This extreme case introduces too heavy regularization that impedes the model from fitting the training data, leading to a decrease in test performance ultimately. We found that $p=0.5$ achieves a satisfactory tradeoff between fitting the training data and adding a proper amount of regularization.

\xhdr{Ablation study on segment size} We also alter the maximum segment size and visualize the results in Figure~\ref{fig:segsize}. A smaller maximum segment size will result in much more number of segments. Interestingly, we found that the proposed \oursfinal{} is very robust to the choice of the maximum segment size, as long as the segment size is reasonally large.

\xhdr{Ablation study on partition algorithms} Please refer to Appendix~\ref{sec:additional}.

\subsection{Runtime Analysis}
Next, we empirically compare runtime of different variants under the proposed \oursabbrev{} framework. We summarize an average time for one forward-backward pass during training on MalNet-Large dataset in Table~\ref{tab:runtime}. Since \oursall{} runs inference for the graph segments that do not require gradients, the runtime of \oursall{} is significantly higher than others'.  We also found that \oursembed{}'s and \oursfinal{}'s runtime are very close to \oursone{}'s; this means the overhead of fetching embeddings from the embedding table $\mathcal{T}$ is minimal. Moreover, \oursfinal{}'s runtime is slightly lower than \oursembed{}'s because in the implementation, we can skip the fetching process if an embedding is set to be dropped. This result demonstrates that our proposed \oursfinal{} not only is efficient in terms of memory usage but also reduces training time significantly. 
\section{Related Works}

\xhdr{Graph property prediction} In the context of graph property prediction, a model must predict a certain characteristic associated with the whole graph. Standard graph neural networks produce node embeddings as outputs \citep{kipf2016semi,hamilton2017inductive,xu2018powerful}. To create a graph embedding (a vector representing the entire graph) out of the node embeddings, pooling methods are usually applied at the end. Common approaches to this problem include simply summing up or averaging all the node embeddings~\citep{dwivedi2020benchmarking}, or introducing a ``virtual node'' connected to all the other nodes~\citep{pham2017graph}. Fully-connected Graph Transformer was recently proposed with an outstanding success on existing graph property prediction benchmarks~\citep{ying2021transformers,rampavsek2022recipe}. However, the fully-connected attention matrix limits the applicability of Graph Transformer to only small graphs with limited number of nodes~\citep{shi2022benchmarking}.

\xhdr{GNN with graph partitioning} ClusterGCN~\citep{chiang2019cluster} is designed for node-level tasks by training on graph segments. It partitions a graph into graph segments and randomly selects graph segments to form a minibatch during training. ROC~\citep{jia2020improving}, PipeGCN~\citep{wan2021pipegcn} and BNS-GCN~\citep{wan2022bns} achieve distributed node-level GCN training through partitioning a graph into small segments such that each could be fitted into a single GPU memory, and then training multiple segments in parallel. All the above graph partitioning techniques for GNNs rely on the fact that an ego-subgraph (a subgraph centered around a node) contains sufficient information to make a prediction for the centered node.
This is not true for graph property prediction tasks where we need to aggregate information from the whole graph to make an accurate prediction. 

\xhdr{GNN with historical embeddings} The idea of historical embeddings was first introduced in VR-GCN~\citep{chen2018stochastic}, which uses historical embeddings  to control the variance of neighborhood sampling. 
GNNAutoScale \citep{fey2021gnnautoscale} incorporates historical embeddings to recover a more accurate neighborhood estimation in a scalable fashion. Developed upon GNNAutoScale, \citet{yu2022graphfm} uses a momentum step to incorporate historical embeddings when updating feature representations to further alleviate the staleness issue. These prior works maintain a historical embedding for each node because they consider node-level tasks. As we consider graph property prediction tasks, we record a historical embedding for each graph segment rather than each node.
\section{Conclusion}

We study how to train a GNN model for large graph property prediction tasks. We propose Graph Segment Training (\oursabbrev{}), a general framework for learning large graph property prediction tasks with a constant memory footprint. We further introduce a historical embedding table to efficiently produce embeddings for graph segments that do not require gradients, and design two novel techniques --- Prediction Head Finetuning and Stale Embedding Dropout --- to mitigate the staleness issue. 
In conclusion, we believe that our proposed method is a step toward making graph property prediction learning more practical and scalable.

\section*{Acknowledgements}
KC and JL acknowledge the support of
DARPA under Nos. HR00112190039 (TAMI), N660011924033 (MCS);
ARO under Nos. W911NF-16-1-0342 (MURI), W911NF-16-1-0171 (DURIP);
NSF under Nos. OAC-1835598 (CINES), OAC-1934578 (HDR), CCF-1918940 (Expeditions), 
NIH under No. 3U54HG010426-04S1 (HuBMAP),
Stanford Data Science Initiative, 
Wu Tsai Neurosciences Institute,
Amazon, Docomo, GSK, Hitachi, Intel, JPMorgan Chase, Juniper Networks, KDDI, NEC, and Toshiba.
CM's contributions were partially supported by ACE, one of the seven centers in JUMP 2.0, a Semiconductor Research Corporation (SRC) program sponsored by DARPA and NSF under grant CCF-2316233.
The content is solely the responsibility of the authors and does not necessarily represent the official views of the funding entities.

\bibliography{main}
\bibliographystyle{plainnat}

\newpage
\appendix
\onecolumn
\section{Proofs and Derivations}
\label{sec:proof}
\begin{theorem}
Under proper a condition that  $W \cdot \Tilde{\vh}_j^{(i)}  \approx 0$ and $ W \cdot \vh_j^{(i)}  \approx 0$, where $W$ is the first linear transformation in $F'$, SED with a keep ratio $p$ ensures to reduce bias term introduced by historical embeddings by a factor of $p$, while introducing another regularization term.
\end{theorem}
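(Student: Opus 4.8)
The plan is to turn the abstract first/second-order split of Eq.~\ref{eq:taylor} into concrete expressions by computing the \emph{mean} and the \emph{second moment} of the per-segment perturbation $\delta_j^{(i)}$ under the plain embedding table (ET) and under SED, and then reading off $B$ and $R$. First I would fix the aggregator to mean pooling, so that a single segment enters the graph embedding with weight $1/J^{(i)}$. Under ET, a segment contributes $0$ to $\delta^{(i)}$ when it lands in the sampled (gradient) set $\mathcal{S}^{(i)}$ and $\tfrac{1}{J^{(i)}}(\Tilde{\vh}_j^{(i)}-\vh_j^{(i)})$ otherwise; averaging over the uniform draw of $\mathcal{S}^{(i)}$ gives $\mathbb{E}[\delta_j^{(i)^\textrm{ET}}]=\tfrac{J^{(i)}-S^{(i)}}{J^{(i)}}\tfrac{1}{J^{(i)}}(\Tilde{\vh}_j^{(i)}-\vh_j^{(i)})$. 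Under SED I substitute the weights $\eta^{(i)}$ of Eq.~\ref{eq:SED}: a sampled segment is up-weighted deterministically and an unsampled segment is kept with probability $p$ (weight $1$) or dropped (weight $0$). Taking expectations over both the choice of $\mathcal{S}^{(i)}$ and the Bernoulli dropout, the $(1-p)\vh_j^{(i)}$ piece produced by the up-weighting of the sampled segment combines with the $-\vh_j^{(i)}$ piece coming from the mean of the dropped segment, leaving exactly $\mathbb{E}[\delta_j^{(i)^\textrm{SED}}]=p\,\mathbb{E}[\delta_j^{(i)^\textrm{ET}}]$. Since $B$ is linear in $\delta_j^{(i)}$, this is the computation that produces the factor-$p$ shrinkage of the bias term.

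To legitimately pull the derivative out of the expectation as the constant matrix $C$ (and likewise treat the Hessian as a fixed operator $H$), I would invoke the stated condition $W\vh_j^{(i)}\approx 0$ and $W\Tilde{\vh}_j^{(i)}\approx 0$. Writing the head as $F'=g(W\,\cdot\,+b)$, this condition forces every true and historical segment embedding to map to essentially the same first-layer pre-activation $b$, so that $D_{\vh_j^{(i)}}(\mathcal{L}\circ F')[\vh_j^{(i)}]\approx C$ and $D^2_{\vh_j^{(i)}}(\mathcal{L}\circ F')[\vh_j^{(i)}]\approx H$ are the \emph{same} operators across all segments and remain valid along the whole perturbation path. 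This is precisely what makes the per-segment linearization in Eq.~\ref{eq:taylor} sound and lets $\mathbb{E}[B]=C\,\mathbb{E}[\delta_j^{(i)}]$ factor cleanly, so that the mean computation of the previous paragraph yields $\mathbb{E}_{\delta^{(i)^\textrm{SED}}_j}[B]=p\,\mathbb{E}_{\delta^{(i)^\textrm{ET}}_j}[B]$, establishing the first half of the theorem.

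For the regularization half I would compute the second moment $\mathbb{E}[\delta_j^{(i)}{\delta_j^{(i)}}^\top]$ and plug it into $\mathbb{E}[R]=\tfrac12\Tr\!\big(H\,\mathbb{E}[\delta_j^{(i)}{\delta_j^{(i)}}^\top]\big)$. Using that the keep indicator is Bernoulli, so $\eta_j^2=\eta_j$ and $\mathbb{E}[\eta_j^2]=p$, the dropout on the unsampled segments contributes an excess covariance proportional to $p(1-p)\,\Tilde{\vh}_j^{(i)}{\Tilde{\vh}_j^{(i)}}^\top$ on top of the squared mean, while the deterministic up-weighting of the sampled segment contributes a term proportional to $\vh_j^{(i)}{\vh_j^{(i)}}^\top$. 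Together these are the ``another regularization term'' of the statement; I would then check the two limiting cases to confirm consistency with the discussion in the main text: as $p\to1$ the extra covariance vanishes and $\mathbb{E}_{\delta^{(i)^\textrm{SED}}_j}[R]$ collapses onto the ET value, whereas as $p\to0$ the regularization blows up and the scheme degenerates to \oursone{}.

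The step I expect to be the main obstacle is the bookkeeping across the two independent randomness sources --- the uniform choice of the backprop set $\mathcal{S}^{(i)}$ and the per-segment Bernoulli dropout --- and in particular verifying that the specific weight $p+(1-p)\tfrac{J^{(i)}}{S^{(i)}}$ is exactly the value that (i) preserves the total aggregate weight in expectation and (ii) makes the $\vh_j^{(i)}$ contributions cancel so that the clean $p$-factor relation emerges rather than a messier affine combination. Once the condition on $W$ has reduced $C$ and $H$ to constants, everything else is routine first- and second-moment algebra.
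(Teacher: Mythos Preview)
Your proposal is correct and follows essentially the same route as the paper: enumerate the three possible values of the per-segment perturbation $\delta_j^{(i)}$ under SED (sampled and up-weighted; unsampled and dropped; unsampled and kept), take first and second moments over the two independent randomness sources, and substitute into the $B$ and $R$ terms of Eq.~\ref{eq:taylor} to obtain the factor-$p$ shrinkage of the bias and the extra $\vh_j^{(i)}{\vh_j^{(i)}}^\top$ regularizer. The only cosmetic difference is that the paper invokes the condition $W\vh_j^{(i)}\approx 0$, $W\Tilde{\vh}_j^{(i)}\approx 0$ purely to argue that the perturbation to the loss is small enough for the second-order Taylor approximation to be trusted, rather than to make $C$ and $H$ constant across segments as you frame it; the moment computations and the final expressions for $\mathbb{E}[B]$ and $\mathbb{E}[R]$ coincide.
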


\begin{proof}

Let $ \delta^{(i)} \triangleq  \vh_s^{(i)}{\bigoplus}\Tilde{\vh}_j^{(i)} - {\bigoplus}\vh_j^{(i)}$ be the perturbation on the graph embedding. We use ET to denote using the embedding table without applying SED. For \oursembed{}, we have
\begin{align*}
    \delta_j^{(i)^\textrm{ET}} =\begin{cases}
      0 & \text{with prob. } \frac{S^{(i)}}{J^{(i)}} \\
      \tilde{\vh}_j^{(i)} - \vh_j^{(i)}  & \text{with prob. } \frac{J^{(i)}-S^{(i)}}{J^{(i)}} \\
    \end{cases}       
\end{align*}
The randomness above comes from the fact that each segment $\gG_j^{(i)}$ is selected for backpropagation with probability $\frac{S^{(i)}}{J^{(i)}}$.

For SED, there are two folds of randomness when training on graph $\gG^{(i)}$: randomly selecting $S^{(i)}$ segments to train and randomly select stale embedding $\Tilde{\vh}_j^{(i)}$ to drop. Thus we can rewrite $\delta_j^{(i)}$ as
\begin{align*}
    \delta_j^{(i)^\textrm{SED}} =
    \begin{cases}
      \frac{(1 - p)(J^{(i)}-S^{(i)})}{S^{(i)}} \vh^{(i)}_j & \text{with prob. } \frac{S^{(i)}}{J^{(i)}} \\
      - \vh_j^{(i)} &  \text{with prob. } \frac{(1 - p) (J^{(i)}-S^{(i)})}{J^{(i)}} \\
      \tilde{\vh}_j^{(i)} - \vh_j^{(i)}  & \text{with prob. } \frac{p(J^{(i)}-S^{(i)})}{J^{(i)}}  \\
    \end{cases}       
\end{align*}

 We apply Taylor expansion around $\delta_j^{(i)}=0$ on the final loss to analyze the effect of this perturbation. In Section A.2 of~\citet{wei2020implicit}, when  $W \cdot \Tilde{\vh}_j^{(i)}  \approx 0$ and $ W \cdot \vh_j^{(i)}  \approx 0$, the perturbation of $\delta^{(i)}$ to the loss function might not be too large, so it supports the use of Taylor Expansion.
 
\begin{align*}
    & \mathcal{L}(F'(\vh_s^{(i)} {\bigoplus} \tilde{\vh}_j^{(i)})) - \mathcal{L}(F'({\bigoplus} \vh_j^{(i)})) \\
    =& \mathcal{L}(F'({\bigoplus} (\vh_j^{(i)}+\delta_j^{(i)}))) - \mathcal{L}(F'({\bigoplus} \vh_j^{(i)})) \\
    \approx& \sum_j D_{\vh_j^{(i)}} (\mathcal{L} \circ F')[\vh_j^{(i)}]\delta_j^{(i)} + \frac{1}{2} {\delta^{(i)}_j}^\top (D^2_{\vh_j^{(i)}} (\mathcal{L} \circ F')[\vh_j^{(i)}]) \delta_j^{(i)}
\end{align*}

Note that we randomly select segments with index $s$ during training, we can then derive an approximation of the expected difference during training as 
\begin{align}
\label{eq:taylor_sup}
&\mathbb{E}_s \mathcal{L}(F'(\vh_s^{(i)} {\bigoplus}\Tilde{\vh}_j^{(i)})) -  \mathcal{L}(F'({\bigoplus}\vh_j^{(i)}))  \\
\approx & \sum_j \mathbb{E}_{\delta_j^{(i)}} \underbrace{D_{\vh_j^{(i)}} (\mathcal{L} \circ F')[\vh_j^{(i)}]\delta_j^{(i)}}_{B} +  \underbrace{\frac{1}{2} {\delta_j^{(i)}}^\top (D^2_{\vh_j^{(i)}} (\mathcal{L} \circ F')[\vh_j^{(i)}]) \delta_j^{(i)}}_{R} \nonumber
\end{align}

We can then compare the effect of SED by substituting the two versions of $\delta_j^{(i)}$ into Eq.~\ref{eq:taylor_sup}. 

So for the first term, we have
\begin{align*}
    \mathbb{E}_{\delta_j^{(i)^\textrm{ET}}}[B] &= \langle D_{\vh_j^{(i)}} (\mathcal{L} \circ F')[\vh_j^{(i)}], \mathbb{E} \delta_j^{(i)} \rangle\\
    &=\langle D_{\vh_j^{(i)}} (\mathcal{L} \circ F')[\vh_j^{(i)}], \frac{J^{(i)}-S^{(i)}}{J^{(i)}} \mathbb{E}(\tilde{\vh}_j^{(i)} - \vh_j^{(i)} ) \rangle\\
    \mathbb{E}_{\delta_j^{(i)^\textrm{SED}}}[B] &= \langle D_{\vh_j^{(i)}} (\mathcal{L} \circ F')[\vh_j^{(i)}], \mathbb{E} \delta_j^{(i)} \rangle\\
    &= \langle D_{\vh_j^{(i)}} (\mathcal{L} \circ F')[\vh_j^{(i)}],  \frac{J^{(i)}-S^{(i)}}{J^{(i)}} \mathbb{E}(\tilde{\vh}_j^{(i)} - \vh_j^{(i)} ) * p \rangle
\end{align*}
whereas for the second term, we have
\begin{align*}
    \mathbb{E}_{\delta_j^{(i)^\textrm{ET}}}[R] =& \langle   D^2_{\vh_j^{(i)}} (\mathcal{L} \circ F')[\vh_j^{(i)}], \frac{\mathbb{E} \delta_j^{(i)} \delta_j^{{(i)}^\top}}{2}\rangle \\
    =& \langle D^2_{\vh_j^{(i)}} (\mathcal{L} \circ F')[\vh_j^{(i)}], \frac{J^{(i)}-S^{(i)}}{2J^{(i)}} (\tilde{\vh}_j^{(i)} - \vh_j^{(i)})^{\bigodot 2} \rangle\\
    \mathbb{E}_{\delta_j^{(i)^\textrm{SED}}}[R] &= \langle   D^2_{\vh_j^{(i)}} (\mathcal{L} \circ F')[\vh_j^{(i)}], \frac{\mathbb{E} \delta_j^{(i)} \delta_j^{{(i)}^\top}}{2}\rangle \\
    =& \langle D^2_{\vh_j^{(i)}} (\mathcal{L} \circ F')[\vh_j^{(i)}], (\frac{(J^{(i)}-S^{(i)})p}{2J^{(i)}} (\tilde{\vh}_j^{(i)} - \vh_j^{(i)})^{\bigodot 2} \\
    &+ \frac{(J^{(i)}-S^{(i)})(1-p)(J^{(i)}-pJ^{(i)}+pS^{(i)})}{2J^{(i)}S^{(i)}} {\vh_j^{(i)}}^{\bigodot 2}) \rangle
\end{align*}
It is easy to check that the statement satisfies given the value calculated.
\end{proof}

\section{Implementation Details}
\label{sec:implementation}

\subsection{Missing Algorithm}
\begin{algorithm}[h]
\caption{Pipeline of \oursfinal{}} 
\label{alg:GST-ETF-SED}
\begin{algorithmic}[1]
\REQUIRE A preprocessed training graph dataset $ \mathcal{D}_\text{train} = \{ ( \bigoplus \gG^{(i)}_j, y^{(i)}) \}_{i=1}^n$. A parameterized backbone $F$ and a prediction head $F'$. A historical segment embedding table $\mathcal{T}$.

\FOR{$t$ = 1 to $T_0$}
    \STATE $\mathcal{B} \leftarrow \text{SampleMiniBatch}( \mathcal{D}_{\text{train}})$
    \FOR{$(\gG^{(i)}, y^{(i)})$ in $\mathcal{B}$}
       \STATE $\{\gG_s^{(i)}\}_{s \in \mathcal{S}^{(i)}} \leftarrow \text{SampleGraphSegments}(\gG^{(i)})$    
        \STATE $\Tilde{\vh}_j^{(i)} \leftarrow \mathcal{T}.\text{LookUp}\left((i, j)\right) \textbf{ for } j \notin \mathcal{S}^{(i)}$
        \STATE $\vh_s^{(i)} \leftarrow F(\gG_s^{(i)})$ \textbf{ \ \ for } $s \in \mathcal{S}^{(i)}$
        \STATE $\mathcal{T} .\text{InsertOrUpdate}((i, s), \vh_s^{(i)} )$
    \ENDFOR
    \STATE $ \text{SGD on } \frac{1}{|\mathcal{B} |} \sum_{i} \mathcal{L}\left(F'( \eta_s^{(i)} \cdot \vh_s^{(i)}{\bigoplus}( \eta_j^{(i)} \cdot \Tilde{\vh}_j^{(i)} ), y^{(i)} \right)  $ \COMMENT{SED with $\eta$ defined in Equation~\ref{eq:SED}}
\ENDFOR
\STATE \# Prediction Head Finetuning
\STATE $\mathcal{T}.\text{InsertOrUpdate}((i, j), F(\gG_j^{(i)}) ) \text{ for every } \gG_j^{(i)}$
\FOR{$t$ = $T_0$ to $T_1$}
    \FOR{$\gG^{(i)}$ in $ \text{SampleMiniBatch}( \mathcal{D}_{\text{train}})$}
        \STATE $\bar{\vh}_j^{(i)} \leftarrow \mathcal{T}.\text{LookUp}\left((i, j)\right) \textbf{ for } j \le J^{(i)} $
    \ENDFOR
    \STATE $ \text{SGD with loss} \leftarrow \frac{1}{|\mathcal{B} |} \sum_{i} \mathcal{L}\left(F'( \bigoplus \bar{\vh}_j^{(i)} ) \right)  $
\ENDFOR
\end{algorithmic}
\end{algorithm}

\begin{table*}[ht]
\centering
\caption{Overview of the graph datasets used in this study.}
\label{tab:datasetstats}
\fontsize{8}{8}\selectfont
\begin{tabular}{ccccccc}
\toprule
             & Avg. \# nodes & Min. \# nodes & Max. \# nodes & Avg. \# edges & Min. \# edges & Max. \# edges \\
\midrule
MalNet-Tiny  & 1,410           &     5             &          4,994        & 2,860           &      4            &            20,096      \\
MalNet-Large & 47,838          &       3,374           &    541,571              & 225,474        &      20,597            &     3,278,318        \\
TpuGraphs & 38,444 & 299 & 615,019 & 62,475 & 380 & 1,058,278 \\
\bottomrule
\end{tabular}
\end{table*}

We follow GraphGym~\citep{you2020design} to represent design spaces of GNN as (message passing
layer type, number of pre-process layers, number of message passing
layers, number of post-process layers, activation, aggregation). Our code is implemented in PyTorch~\citep{paszke2017automatic}.

\xhdr{Implementation details for MalNet-Large} We consider three model variations for the MalNet-Large dataset. Please refer to their hyperparameters in Table~\ref{tab:malnet_config}. We
use Adam optimizer~\citep{kingma2014adam} with the base learning rate of 0.01 for GCN and SAGE. For GraphGPS, we use AdamW optimizer~\citep{loshchilov2017decoupled} with the cosine scheduler and the base learning rate of 0.0005. We use L2
regularization with a weight decay of 1e-4. We train for 600 epochs until convergence. For Prediction Head Finetuning, we finetune for another 100 epochs. We limit the maximum segment size to 5,000 nodes, and use a keep probability $p=0.5$ if not otherwise specified. We train with CrossEntropy loss.

\begin{table}[h]
\centering
\caption{Detailed GNN/Graph Transformer designs used in MalNet-Tiny and MalNet-Large.}
\label{tab:malnet_config}
\fontsize{8}{8}\selectfont
\begin{tabular}{l|lll}
\toprule
model                      & GCN & SAGE & GraphGPS \\
\midrule
message passing layer type &  GCNConv   &   SAGEConv   &     GatedGCN+Performer     \\
pre-process layer num.     &   1  &  1    &       0   \\
message passing layer num. &   2  &   2   &       5   \\
post-process layer num.    &   1  &    1  &        3  \\
hidden dimension           &   300  &   300   &    64      \\
activation         &  PReLU   &   PReLU   &    ReLU      \\
aggregation                &  mean   &  mean    &  mean   \\
\bottomrule
\end{tabular}
\end{table}

\xhdr{Implementation details for MalNet-Tiny} We use the same model architectures/training schedules as in the MalNet-Large dataset. The only difference is that as graphs in MalNet-Tiny have no more than 5000 nodes, so we limit maximum segment size to 500 here.

\xhdr{Implementation details for TpuGraphs} We only consider SAGE with configurations (SAGEConv, 0, 4, 3, 128, ReLU, sum) for the TpuGraphs dataset. 
We use Adam optimizer with the base learning rate of 0.0001. We train for 200,000 iterations until convergence. We by default limit the maximum segment size to 8,192 nodes, and use a keep probability $p=0.5$ if not otherwise specified. Since we care more about relative ranking than the absolute runtime, we use PairwiseHinge loss within a batch during training:
\begin{align*}
    \mathcal{L}(\widehat{y}_1, \widehat{y}_2) = \sum_i \sum_j \mathbb{I}[y_i > y_j] \cdot \max(0, 1 - (\widehat{y}_i - \widehat{y}_j))
\end{align*}

\section{Additional Results}
\label{sec:additional}
\xhdr{Convergence analysis} To study the effect on convergence for the proposed framework GST and technique SED, we visualize training/test curve per epoch on TpuGraphs dataset in Figure~\ref{fig:congergence}, MalNet-Tiny dataset in Figure~\ref{fig:congergence_tiny}. We show that the convergence speed of various methods studied are quite similar. In addition, the convergence speed in terms of iterations is similar between Full Graph Training and the proposed GST. Due to certain implementation overheads, e.g., on-the-fly graph segment extraction, embedding table query, we didn’t observe speed up in terms of wall-clock time for our current implementation yet. Nevertheless, the implementation can be optimized further to reduce the overhead, we leave it for future work.

\begin{figure}[h]
    \centering
    \includegraphics[width=0.6\textwidth]{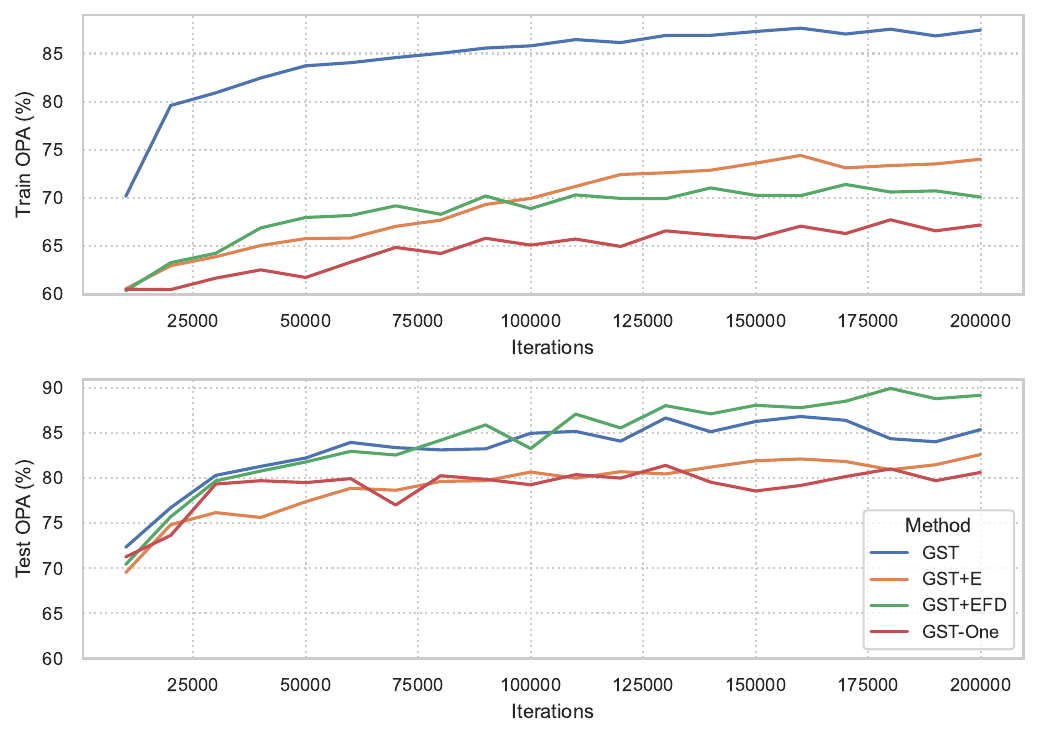}
    \caption{Accuracy curve on TpuGraphs dataset.}
    \label{fig:congergence}
\end{figure}

\begin{figure}[h]
    \centering
    \includegraphics[width=0.6\textwidth]{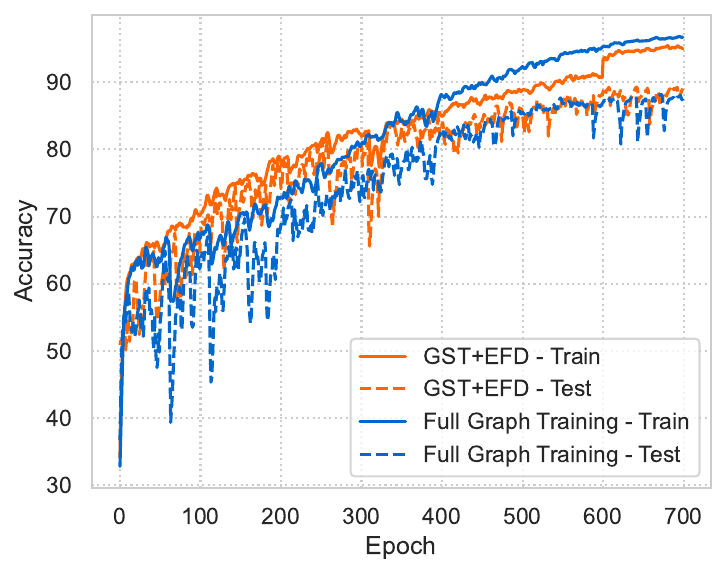}
    \caption{Accuracy curve on MalNet-Tiny dataset.}
    \label{fig:congergence_tiny}
\end{figure}

\xhdr{Ablation study on partition algorithms} We incorporated an ablation study focusing on various graph partition algorithms. The test accuracy of GST+EFD using the SAGE backbone for 5 iterations is depicted in Table~\ref{tab:partition}. Our findings indicate that the Random Edge-Cut algorithm delivers subpar results due to its inability to maintain the integrity of the subgraph structure. Some attentive readers might hypothesize that ignoring edges between segments could lead to a decrease in graph property prediction accuracy, leading us to further investigate Vertex-Cut partition algorithms. In theory, Vertex-Cut techniques, which distribute edges across different machines and duplicate nodes as necessary, are likely to result in less information loss compared to Edge-Cut methods. Our empirical findings show that all partition algorithms that manage to retain the local structure have quite similar performance levels. This suggests that edges connecting different segments do not have a significant impact on the final prediction accuracy.

\begin{table}[h]
\centering
\caption{Test accuracy when combined with different partition algorithms.}
\label{tab:partition}
\fontsize{8}{8}\selectfont
\begin{tabular}{cc|cc}
\toprule
  &         & MalNet-Tiny & MalNet-Large \\
           \midrule
Edge-Cut & Random   &      85.43$\pm$0.98	&74.02$\pm$2.23      \\
Edge-Cut & Louvain    &    	88.95$\pm$0.67 &	89.16$\pm$0.85     \\
Edge-Cut & METIS    &       89.24$\pm$0.53	& 89.78$\pm$0.68        \\
Vertex-Cut & Random   &      88.12$\pm$1.17	&87.69$\pm$1.51      \\
Vertex-Cut & DBH~\citep{xie2014distributed}    &    	88.79$\pm$0.74 &	89.28$\pm$0.93    \\
Vertex-Cut & NE~\citep{zhang2017graph}    &       89.16$\pm$0.70	& 89.49$\pm$0.87        \\
\bottomrule
\end{tabular}
\end{table}

\section{Broader Impact}

The paper presents Graph Segment Training (GST), a framework for predicting properties of large graphs using a divide-and-conquer approach. This method addresses the challenge of memory limitation during training and has the potential to bring significant impact across various domains, with notable efficiency and accuracy improvements. GST can be particularly beneficial to industries that need to manage and interpret large-scale graph data. In telecommunications, it can help optimize network infrastructure, while in cybersecurity, it could improve anomaly detection in network traffic. Furthermore, enabling the capability in large network analysis can contribute to more resilient infrastructure, enhancing the quality of life in many communities. As with any advancement in AI, there's a risk that the benefits of this technology will be unevenly distributed, potentially increasing economic disparity. Companies with access to large amounts of data and the computational resources to analyze it might reap disproportionate benefits, which could further exacerbate the digital divide.

\section{Limitations}

While Graph Segment Training (GST) represents a significant advancement in graph property prediction, there are several potential limitations to this approach, as highlighted below:

\textbf{Segmentation Limitations}: The efficacy of the Graph Segment Training (GST) approach is influenced by the proficiency of the graph partitioning procedure. While our empirical findings indicate that various partitioning algorithms that maintain locality tend to produce satisfactory outcomes, partitions created through random edge-cut methods haven't demonstrated the same level of success.

\textbf{Historical Embedding Table}: The use of a historical embedding table to obtain embeddings for non-sampled segments introduces additional implementation complexity and potential for errors. If not managed properly, it could lead to inefficient memory usage or even slower the training process.
\end{document}